\newtheorem{Theorem}{Theorem}[section]
\newtheorem{Proposition}{Proposition}[section]
\theoremstyle{definition}
\newtheorem{Definition}{Definition}[section]
\numberwithin{equation}{section}
\begin{document}
\makeatletter
\begin{center}
\large{\bf Coordinate Descent for MCP/SCAD Penalized Least Squares Converges Linearly}
\end{center}\vspace{5mm}
\begin{center}
\textsc{Yuling Jiao \footnote{ School of Mathematics and Statistics, and
Hubei Key Laboratory of Computational Science, Wuhan University, Wuhan 430072, PR China
(yulingjiaomath@whu.edu.cn)}, Dingwei Li\footnote{ School of Mathematics and Statistics, Wuhan University, Wuhan 430072, PR China (lidingv@whu.edu.cn)}, Min Liu\footnote{ School of Mathematics and Statistics, Wuhan University, Wuhan 430072, PR China
(mliuf@whu.edu.cn)}, Xiliang Lu \footnote{Corresponding author, School of Mathematics and Statistics, and
Hubei Key Laboratory of Computational Science, Wuhan University, Wuhan 430072, PR China
(xllv.math@whu.edu.cn)}}
\end{center}
\vspace{2mm}
\noindent\begin{minipage}{14cm}
{\bf Abstract:}
Recovering sparse signals from observed data is an important topic   in signal/imaging processing, statistics and machine learning. Nonconvex penalized least squares have been attracted a lot of attentions since they enjoy nice statistical properties. Computationally,   coordinate descent (CD) is a workhorse for minimizing the nonconvex penalized least squares criterion  due to its  simplicity and  scalability.  In this work,  we prove the linear convergence rate to CD for solving MCP/SCAD penalized least squares problems.
\end{minipage}
\\[5mm]

\noindent{\bf Keywords:} { Nonconvex penalized least squares problems,  MCP/SCAD, Coordinate descent, KL property, Linear convergence.}\\
\noindent{\bf Mathematics Subject Classification:} {15A29, 62J07}\\
\hbox to14cm{\hrulefill}\par

\section{Introduction}
Considering the sparse linear estimation   problem
\begin{equation}\label{LinearEquationWithNoise}
b = Ax^{\ast}+\xi,
\end{equation}
where the vector $x^{\ast}\in\mathbb{R}^{p}$ denotes the sparse regression coefficient or sparse signal to be recovered, the vector $\xi\in\mathbb{R}^{n}$ is the random error term, and the design matrix $A\in\mathbb{R}^{n\times p}$  with $n\ll p$ describing the system response mechanism. Throughout, we assume the matrix $A$ has normalized column vectors $\{A_{i}\}$, i.e., $\|A_{i}\|_{2}=1$ for $i=1,\dots,p$.
 The basis pursuit \cite{chen2001atomic} or lasso \cite{tibshirani1996regression}
\begin{equation}\label{Lasso}
\min_{x\in\mathbb{R}^{p}}F(x)=\frac{1}{2}\|Ax-b\|^{2}_{2}+\lambda\|x\|_{1},
\end{equation}
is  an widely used  sparse recovery model.
 The minimizers of Lasso  \eqref{Lasso} enjoy attractive statistical properties \cite{candes2005decoding, meinshausen2006high, zhao2006on}.
The convexity of the problem \eqref{Lasso} allows designing fast and global convergent  algorithms.
 see \cite{tropp2010computational} for an overview.
However,
 the Lasso estimator tends to produce biased estimates for large coefficients \cite{zhang2008the}, and hence lacks oracle property
\cite{fan2001variable, fan2004nonconcave}. Several nonconvex penalty functions has been proposed to remedy this including the MCP \cite{zhang2010Nearly} and SCAD \cite{fan2001variable, fan2004nonconcave}.

Consider the following nonconvex optimization problem
\begin{equation}\label{nonconvex model}
\min_{x\in\mathbb{R}^{p}}F(x)=\frac{1}{2}\|Ax-b\|^{2}_{2}+\sum_{i=1}^{p}\rho_{\lambda,\tau}(x_{i}),
\end{equation}
where $\rho_{\lambda,\tau}$ is a non-convex penalty, $\lambda>0$ is a regularization parameter, and $\tau\geq 0$ controls the degree of concavity of penalty.
The nonconvex function $\rho_{\lambda,\tau}$
 satisfies the requirements that it is singular at the origin in order to achieve sparsity and its derivative vanishes for large values so as to ensure unbiasedness.  For SCAD, it is defined for $\tau > 2$ via
\begin{equation}\label{integralscad}
\rho_{\lambda, \tau}(t) = \lambda\int_{0}^{|t|}\min\left(1,  \frac{\max\left(0, \lambda\tau-|s|\right)}{\lambda(\tau-1)}\right)\mbox{d}s
\end{equation}
and computing the integral explicitly yields the expression in Table \ref{table:Nonconvex penalty}. Further, variable selection consistency and asymptotic estimation efficiency were studied in \cite{fan2004nonconcave}. MCP was devised in the same spirit as SCAD which is defined as
\begin{equation}\label{integralmcp}
\rho_{\lambda, \tau}(t) = \lambda\int_{0}^{|t|}\max\left(0, 1-\frac{|s|}{\lambda\tau}\right)\mbox{d}s.
\end{equation}
MCP minimizes the maximum concavity $\sup_{0<t_1<t_2}\frac{\left( \rho'_{\lambda,\tau}(t_{1}) - \rho'_{\lambda,\tau}(t_{2}) \right)}{t_{2} - t_{1}}$ to satisfy unbiasedness and feature selection constraints: $\rho_{\lambda, \tau}'(t) = 0$ for any $|t|\geq \lambda\tau$ and $\rho_{\lambda\tau}'(0^{\pm}) = \pm\lambda$. The condition $\tau>1$ ensures the well-posedness of the thresholding operator \cite{zhang2010Nearly}. The gradient functions of SCAD and MCP are
\begin{equation}\label{gradscad}
\rho_{\lambda, \tau}'(t) =
\left \{\begin{array}{rcl}
&0&,\ |t|\geq \lambda\tau,\\
&\frac{\lambda\tau-\frac{1}{2}|t|}{\tau-1}&,\ \lambda<|t|<\lambda\tau,\\
&\lambda&,\ |t|\leq \lambda\tau
\end{array}\right.
\nonumber
\end{equation}
and
\begin{equation}\label{gradmcp}
\rho_{\lambda, \tau}'(t) =
\left \{\begin{array}{rcl}
&\lambda - \frac{|t|}{\lambda\tau}&,\ |t|< \lambda\tau,\\
&0&,\ |t|\geq \lambda\tau
\end{array}\right.
\nonumber
\end{equation}
respectively. We summarize the function $\rho_{\lambda, \tau}$ corresponding to Lasso, SCAD,  MCP and their  thresholding functions in Table \ref{table:Nonconvex penalty}. We plot the  Lasso, MCP, SCAD penalties, derivative of  these  penalties and their   thresholding functions in Figure
\ref{fig1}.
\begin{table}[H]\label{table:Nonconvex penalty}
\centering
\caption{Nonconvex penalty $\rho_{\lambda, \tau}(t)$ and the thresholding operators $\mathcal{S}^{\rho}_{\lambda,\tau}$}
\begin{tabular}{lll}
\hline
penalty             & $\rho_{\lambda, \tau}(t)$ & $\mathcal{S}^{\rho}_{\lambda,\tau}(c)$ \\ \hline
LASSO               & $\lambda|t|$              & $\text{sgn}(c)\max\{|c|-\lambda, 0 \}$ \\
SCAD $(\tau >2)$    & $\left\{\begin{array}{ll}\frac{\lambda^{2}(\tau+1)}{2} & |t|>\lambda \tau \\ \frac{\lambda \tau|t|-\frac{1}{2}\left(t^{2}+\lambda^{2}\right)}{\tau-1} & \lambda<|t| \leq \lambda \tau \\ \lambda|t| & |t| \leq \lambda\end{array}\right.$
 &
$\left\{\begin{array}{ll}0 & |v| \leq \lambda \\ \operatorname{sgn}(v)(|v|-\lambda) & \lambda<|v| \leq 2 \lambda \\ \operatorname{sgn}(v) \frac{(\tau-1)|v|-\lambda \tau}{\tau-2} & 2 \lambda<|v| \leq \lambda \tau \\ v & |v|>\lambda \tau\end{array}\right.$ \\
MCP $(\tau >1)$     & $\left\{\begin{array}{ll}\lambda\left(|t|-\frac{t^{2}}{2 \lambda \tau}\right) & |t|<\tau \lambda \\ \frac{\lambda^{2} \tau}{2} & |t| \geq \tau \lambda\end{array}\right.$
&
$\left\{\begin{array}{ll}0 & |v| \leq \lambda \\ \operatorname{sgn}(v) \frac{\tau(|v|-\lambda)}{\tau-1} & \lambda < |v| \leq \lambda \tau \\ v & |v| > \lambda \tau\end{array}\right.$
\\ \hline
\end{tabular}
\end{table}

\begin{figure}[H]\label{fig1}
\includegraphics[width=1.8in]{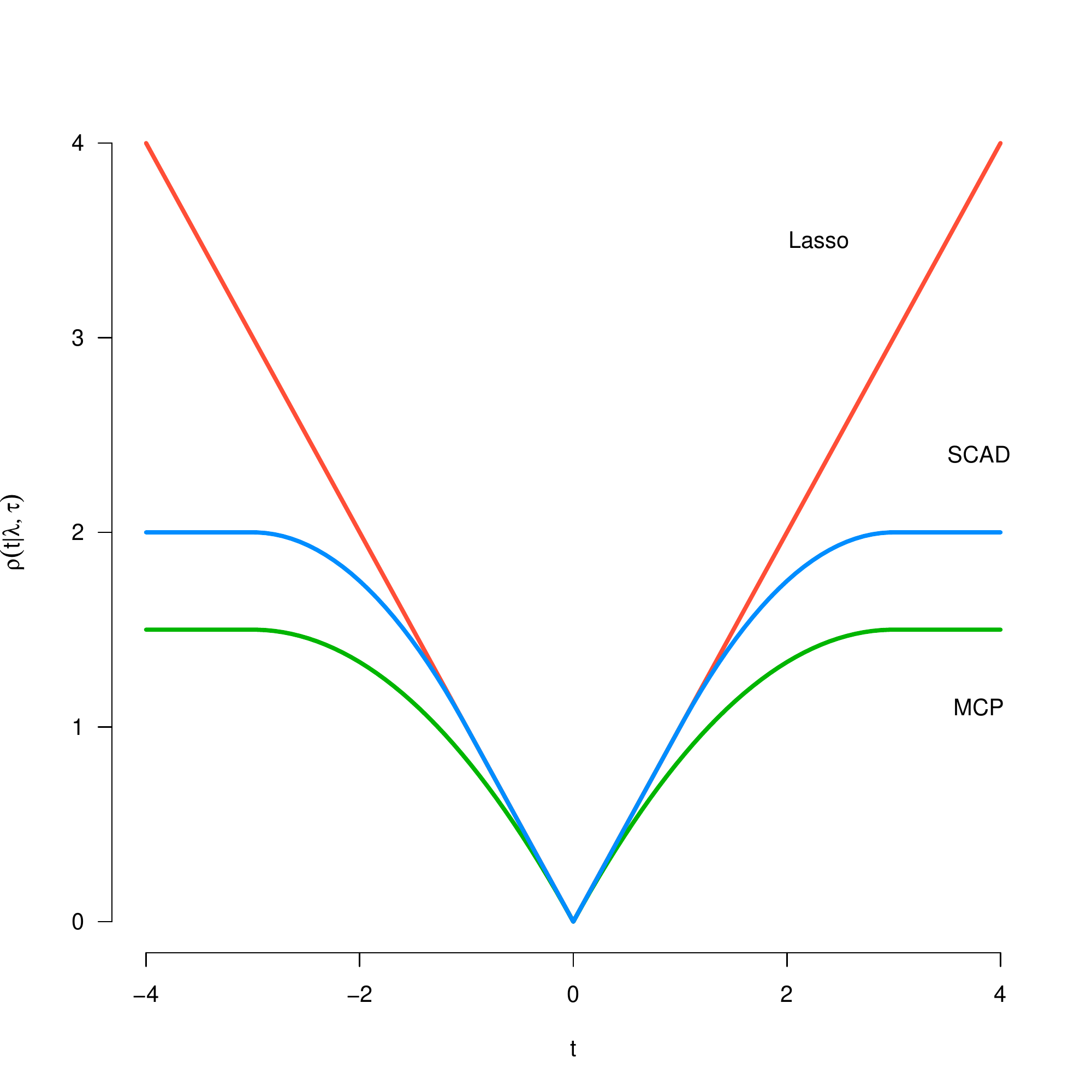}
\includegraphics[width=1.8in]{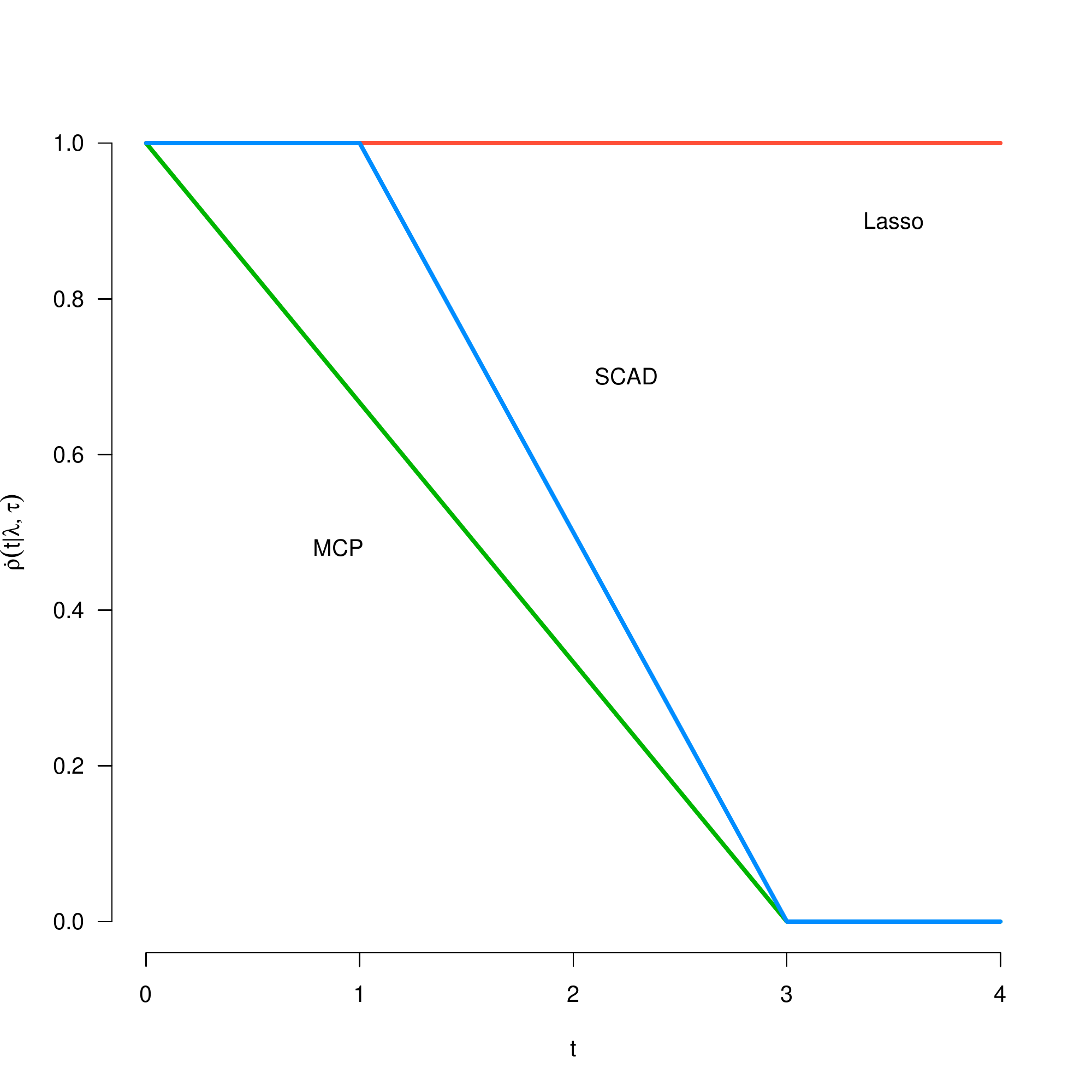}
\includegraphics[width=1.8in]{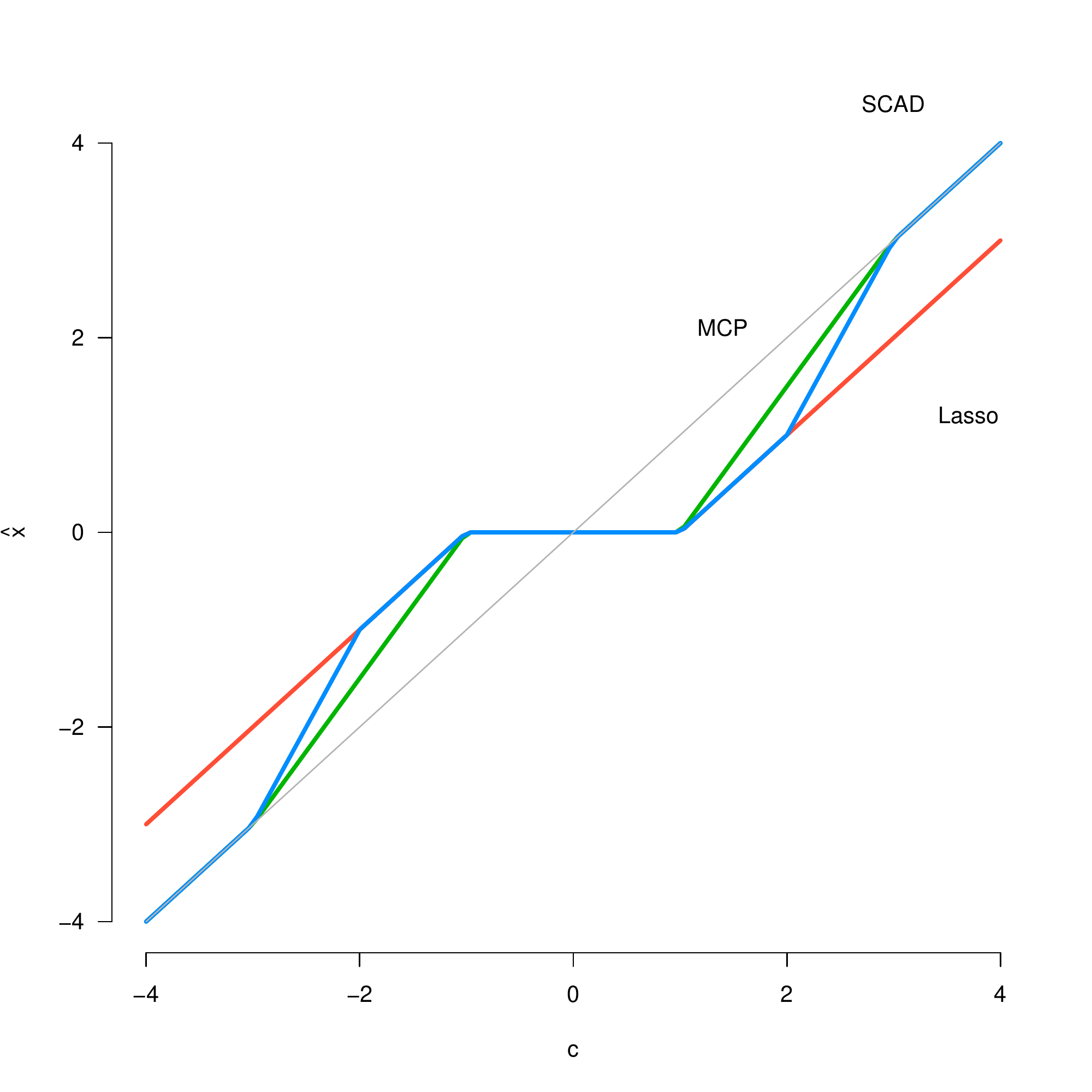}
\caption{Lasso, SCAD, MCP  penalties, derivative of  these penalties and their  thresholding functions.}
\end{figure}

The nonconvexity and nonsmoothess of the SCAD and MCP penalty  poses  challenge for solving (\ref{nonconvex model}).
Several efforts has been made to handle this including local quadratic approximation (LQA)  \cite{fan2001variable}, local linear approximation (LLA) \cite{zou2008one} and multi-stage convex relaxation \cite{zhang2010analysis}, coordinate descent (CD)  in either Jacobi \cite{she2009thresholding} or Gauss-Seidel \cite{mazumder2011sparsenet, breheny2011coordinate} fashion.
Among the above mentioned numerical methods, coordinate descent proposed in   \cite{mazumder2011sparsenet, breheny2011coordinate} became a popular solver in statistical communities due to its simplicity and scalability.
  Numerical experiments in \cite{mazumder2011sparsenet,breheny2011coordinate} demonstrates fast  convergence of CD for SCAD and MCP. However,
  the convergence analysis of  CD is fall behind its excellent numerical performance. Indeed, in \cite{mazumder2011sparsenet, breheny2011coordinate} they showed any cluster point of the iterates is a stationary point (under the assumption that the iteration  sequence has clusters)   by using the idea  developed in \cite{tseng2001convergence}. In this paper we fill this gap by showing linear convergence rate of CD for solving (\ref{nonconvex model}).

The rest of the paper are organized as follows. In  section 2, we prove the linear convergence rate of CD.
We give the conclusion in Section 3.

\section{Convergence rate analysis of CD }\label{conv}

\subsection{Coordinate descent}\label{cda}
In this section, we recall the CD algorithm \cite{mazumder2011sparsenet, breheny2011coordinate} for (\ref{nonconvex model}) with SCAD and MCP penalties.
The objective function  reads
\begin{equation}
F(x)=\frac{1}{2}\|Ax-b\|^{2}_{2}+\sum_{i=1}^{p}\rho_{\lambda,\tau}(x_{i}).
\nonumber
\end{equation}
Given the current iteration  $x^{k}$, we update $x^{k+1}$ by
\begin{equation*}
x_{i}^{k+1}=\arg\min_{t}F(x_{1}^{k
+1},\ldots,x_{i-1}^{k+1},t,x_{i+1}^{k},\ldots,x_{p}^{k})\ \mbox{for}\ i=1,\ldots,p.
\end{equation*}
Some algebra shows that
$$x^{k+1}_i \in \arg\min_{t} f_i(t) := \frac{1}{2}(t-c^k_i)^{2}+\rho_{\lambda,\tau}(t) $$
where
\begin{align}\label{cik}
c_{i}^{k}  = A_{i}^{T}\left(  b - \sum_{j=1}^{i-1}x_{j}^{k+1}A_{j} - \sum_{j=i+1}^{p}x_{j}^{k}A_{j} \right).
\end{align}
By the definition of the   thresholding operator of $\rho_{\lambda,\tau}$  in Table \ref{table:Nonconvex penalty},
$$x^{k+1}_i = S_{\lambda,\tau}^{\rho}(c^k_i), i=1,2...,p.$$
%
%
%

To sum up, we present the CD algorithm in the following algorithm

\begin{algorithm}[H]\caption{Coordinate Descent}
\label{alg:Framwork}
\begin{algorithmic}
\State Given initial point $x^{0}$, parameters $\lambda,\ \tau$
\Repeat
\State Update $x^{k+1}$ by $x_{i}^{k+1}=S_{\lambda,\tau}^{\rho}\left(c_{i}^{k}\right)\ \mathrm{for} \ i=1,\ldots,p$, with  $c_{i}^{k}$ in \eqref{cik}.
\Until{Stop condition}\\
\end{algorithmic}
\end{algorithm}

\subsection{Preliminaries on nonsmooth analysis}\label{pre}
To prove the convergence rate, we need the some tools in nonsmooth analysis including limiting subdifferential and KL property.

First we present definition of limiting  subdifferential.
Recall the definition of subdifferential at point $x$ for convex function
\begin{equation}
\partial f(x) := \left\{z\in\mathbb{R}^{n}:\ f(x) - f(z) - \langle z, x-z\rangle\geq 0 \right\}.
\end{equation}
when $f$ is non-convex, one can extend subdifferential to limiting-subdifferdntial \cite{1998Variational}.
\begin{Definition}
For a proper function $f:\ \mathbb{R}^{n}\rightarrow [-\infty, +\infty ]$, its \textbf{limiting subdifferential} at $x\in \textbf{dom}f$ is defined by
\begin{equation}
\partial_{lim} f(x) := \left\{ \nu\in\mathbb{R}^{n}:\   \exists x^{k}\stackrel{f}{\longrightarrow} x,\ \nu^{k}\rightarrow \nu\ \right\},
\end{equation}
with $ \lim_{z\rightarrow x^{k}}\inf  \frac{f(z) - f(x^{k}) - \langle \nu^{k}, z - x^{k}\rangle}{\|z - x^{k}\|} \geq 0,\ \forall k$, and $ x^{k}\stackrel{f}{\longrightarrow} x$ denoting  $x^{k}\rightarrow x$ and $f(x^{k})\rightarrow f(x)$. We also write $\textbf{dom}\partial_{lim} f:= \{ x\in \mathbb{R}^{n}:\ \partial f(x)\neq 0 \}$.
\end{Definition}
It is obvious that the limiting subdifferential coincides with   the gradient for  differentiable functions.
Moreover, when $f$ is convex, the limiting subdifferential equal to the subdifferential in convex analysis.
 Without loss of generality, we use the notation $\partial f$ to denote limiting subdifferential  in the rest of the paper.
Finally, we will say that $x^{\ast}\in \mathbb{R}^{n}$ is a stationary or critical point of $f$ if $0 \in \partial f(x^{\ast})$, which is a necessary condition for $x^{\ast}\in \arg\min_{x} f(x)$.

 Next, we recall the  KL property, KL function and KL exponent which are basic tools   used in convergence analysis for   nonconvex problems. These results are adopted from  \cite{Attouch2010ProximalAM,Attouch2013Convergence,Li2016Calculus,Ochs2014iPiano}.
\begin{Definition}
We say that a proper closed function $f$ has the Kurdyka-{\L}ojasiewicz (KL) property at $ \bar{x}\in \textbf{dom} \partial f $ if there exist a neighborhood $\mathcal{N}$ of $\bar{x}$, $\nu\in(0,\infty]$ and a continuous concave function $\psi:\ [0,\nu)\rightarrow\mathbb{R}_{+}$ with $\psi(0) = 0$ such that:
\begin{enumerate}[i)]
\item $\psi$ is a continuously differentialable on $(0, \nu)$ with $\psi^{\prime}$ over $(0, \nu)$;
\item for all $x\in\mathcal{N}$ with $f(\bar{x}) < f(x) < f(\bar{x}) + \nu$, one has
\begin{equation}
\psi^{\prime}\left( f(x) - f(\bar{x})\right)\text{dist}\left(0, \partial f(x)\right)  \geq 1.
\end{equation}
\end{enumerate}
A proper closed function $f$ satisfying the KL property at all points in $\textbf{dom}\partial f$ is called a KL function.
\end{Definition}

\begin{Definition}
For a proper closed function $f$ satisfying the KL property at $x\in\textbf{dom}\partial f$, if the corresponding function $\psi$ can be chosen as $\psi(s)=\bar{c}s^{1-\alpha}$ for some $\bar{c} > 0$ and $\alpha\in [0,1) $, i.e., there exist $c,\epsilon>0$ and $\nu\in (0, \infty]$ so that
\begin{equation}
\text{dist}(0, \partial f(x))\geq c(f(x) - f(\bar{x}))^{\alpha}
\end{equation}
whenever $\|x-\bar{x}\|\leq \epsilon$ and $f(\bar{x}) < f(x) < f(\bar{x}) + \nu$, then we say that $f$ has the KL property at $\bar{x}$ with an exponent of $\alpha$. If $f$ is a KL function and has the same exponent $\alpha$ at any $\bar{x}\in \textbf{dom}\partial f$, then we say that $f$ is a KL function with a exponent of $\alpha$.
\end{Definition}

\begin{Proposition}\label{klindex}
The objective cost function $F$ defined in \eqref{nonconvex model} is a KL function with an exponent of $\frac{1}{2}$.
\end{Proposition}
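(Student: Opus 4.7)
My plan is to recognize $F$ as a continuous piecewise linear-quadratic function on $\mathbb{R}^p$ and then invoke the calculus of KL exponents from \cite{Li2016Calculus}.

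The first step is to verify the piecewise linear-quadratic structure. For both MCP and SCAD, $\rho_{\lambda,\tau}(t)$ is continuous on $\mathbb{R}$ and is a polynomial of degree at most two on each of finitely many closed intervals, determined by the breakpoints $0$, $\pm\lambda$, $\pm 2\lambda$, $\pm\lambda\tau$ (see the explicit formulas in Table~\ref{table:Nonconvex penalty}). Consequently the separable penalty $P(x) := \sum_{i=1}^{p}\rho_{\lambda,\tau}(x_{i})$ is continuous on $\mathbb{R}^p$, and $\mathbb{R}^p$ admits a fixed finite partition into polyhedra (Cartesian products of the coordinate intervals) on each of which $P$ restricts to a quadratic polynomial. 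Adding the global quadratic $\tfrac{1}{2}\|Ax-b\|_2^2$ preserves the structure, so $F$ is a continuous, proper, piecewise linear-quadratic (in particular subanalytic) function.

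The second step is to read off the KL exponent. A proper continuous piecewise linear-quadratic function has the KL property with exponent $\tfrac{1}{2}$ at every point of $\mathrm{dom}\,\partial F$; the argument combines the elementary \L{}ojasiewicz inequality for a single quadratic polynomial (for which exponent $\tfrac{1}{2}$ can be verified by direct computation from the gradient) with a Hoffman-type error bound on each polyhedral piece. Aggregating over the finitely many pieces whose closures meet a fixed $\bar{x}\in\mathrm{dom}\,\partial F$ yields, on a neighborhood $\mathcal{N}$ of $\bar{x}$, a uniform inequality of the form
\begin{equation}
\mathrm{dist}\bigl(0,\partial F(x)\bigr)\;\geq\;c\,\bigl(F(x)-F(\bar{x})\bigr)^{1/2},\qquad x\in\mathcal{N},\nonumber
\end{equation}
which is exactly the KL property with exponent $\tfrac{1}{2}$ and $\psi(s)=\tfrac{2}{c}s^{1/2}$.

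The main obstacle I foresee is that the cleanest statements of KL-for-piecewise-linear-quadratic in \cite{Li2016Calculus} are formulated under a convexity assumption, whereas $F$ is nonconvex because of MCP/SCAD. I would therefore carry out the aggregation step by hand: enumerate the finitely many polyhedral pieces whose closures contain $\bar{x}$, compute the limiting subdifferential $\partial F$ on each piece (which reduces to the classical gradient of the active quadratic piece away from interfaces, plus a normal-cone-type contribution at interfaces, accounting in particular for the $|t|$-type kink at $0$ inherited from the penalty), apply the quadratic \L{}ojasiewicz bound piecewise, and take the minimum of the finitely many resulting constants. The most delicate point is the bookkeeping at interfaces — especially points where some coordinate $x_i$ vanishes, where $\partial F$ is genuinely set-valued — but this is mechanical given the explicit piecewise formulas for $\rho_{\lambda,\tau}$ and its subdifferential.
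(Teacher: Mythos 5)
The paper's own ``proof'' is a one-line citation: Corollary 5.2 of \cite{Li2016Calculus} \emph{is} precisely this proposition (least squares with SCAD or MCP regularization has KL exponent $\tfrac12$). What you are doing is reconstructing the proof of that cited result, and your skeleton is essentially the right one — it is in fact how \cite{Li2016Calculus} proceeds: write $F=\min_{j}\bigl(q_j+\delta_{P_j}\bigr)$ over the finitely many polyhedral pieces $P_j$ on which $F$ agrees with a quadratic $q_j$, establish exponent $\tfrac12$ for each constrained piece, and aggregate using the calculus rule for the KL exponent of a minimum of finitely many functions (which handles your ``interface bookkeeping'' via the inclusion $\partial F(x)\subseteq\bigcup_{j\in J(x)}\partial\bigl(q_j+\delta_{P_j}\bigr)(x)$ for continuous $F$).

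The genuine gap is in the per-piece step. You assert that exponent $\tfrac12$ on each polyhedral piece follows from ``the elementary \L{}ojasiewicz inequality for a single quadratic polynomial'' combined with ``a Hoffman-type error bound,'' and that the rest is mechanical. That combination suffices when each piece is a \emph{convex} quadratic over a polyhedron, but here it is not: on the pieces where coordinates lie in the concave regime of MCP/SCAD, the active quadratic has Hessian of the form $A^{T}A-\kappa I_{S}$ (e.g.\ $\kappa=1/\tau$ for MCP), which is indefinite whenever $n\ll p$. For a \emph{nonconvex} quadratic restricted to a polyhedron, the statement ``$\mathrm{dist}(0,\nabla q_j(x)+N_{P_j}(x))\geq c\,(q_j(x)-q_j(\bar x))^{1/2}$ near a stationary point'' is the Luo--Tseng error bound for affine variational inequalities (equivalently, Theorem 4.2 of \cite{Li2016Calculus}); it is a substantive theorem whose proof is not a direct concatenation of the quadratic \L{}ojasiewicz inequality with Hoffman's lemma, and it also requires controlling the function values on the stationary set (finiteness of stationary values, local constancy on connected components). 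So the route is sound, but as written the key lemma is under-justified: either import the nonconvex quadratic-over-polyhedron result explicitly, or — as the paper does — cite Corollary 5.2 of \cite{Li2016Calculus} directly and skip the reconstruction.
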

\begin{proof}
Follows  from \textbf{Corollary 5.2} of \cite{Li2016Calculus}.
\end{proof}

Last, we recall the main conditions to prove the convergence of general  algorithms for nonconvex problems.
Let $H:\ \mathbb{R}^{p}\rightarrow\mathbb{R}\cup\{\infty\}$ be a proper lower semi-continuous function and $\{x^{k}\}_{k=0}^{\infty}$ a sequence generated by some optimization method. Assume the following conditions are satisfied:
\begin{itemize}
\item (\textbf{H1})\ The sequence $\{H(x^{k})\}_{k=0}^{\infty}$ is monotonically decreasing thus converging. In particular for any finite starting point $x^{0}\in\mathbb{R}^{p}$, there exists some positive constant $\theta$, such that the sequence $\{x^{k}\}_{0}^{\infty}$ satisfies
\begin{equation}
H(x^{k})-H(x^{k+1})\geq \theta\|x^{k}-x^{k+1}\|^{2}_{2};
\end{equation}
\item (\textbf{H2})\ For each $k\in\mathbb{N}$, there exists some $d^{k+1}\in\partial H(x^{k+1})$, such that
\begin{equation}
\|d^{k+1}\| \leq C\|x^{k+1} - x^{k}\|_{2},
\end{equation}
where $C>0$;
\item (\textbf{H3}) There exists a subsequence $\{x^{k_{l}}\}_{l=0}^{\infty}$ of $\{x^{k}\}_{k=0}^{\infty}$, s.t.,
\begin{equation}
x^{k_{l}}\rightarrow x^{\ast}\ \text{and}\ H(x^{k_{l}})\rightarrow H(x^{\ast});
\end{equation}
\end{itemize}

\begin{Proposition}\label{finite length}
Let $H:\ \mathbb{R}^{p}\rightarrow\mathbb{R}\cup\{\infty\}$ be a proper lower semi-continuous function. Consider a sequence $\{x^{k}\}_{k\in\mathbb{N}}$ that satisfies $(\textbf{H1})-(\textbf{H3})$. If $H$ has the KL property at some cluster point $x^{\ast}\in\mathbb{R}^{p}$ specified in $(\textbf{H3})$, then the sequence $\{x^{k}\}_{k\in\mathbb{N}}$ convergences to $\bar{x} = x^{\ast}$ as $k$ goes to infinity, and $\bar{x}$ is a critical point of $H$. Moreover the sequence $\{x^{k}\}_{k\in\mathbb{N}}$ has a finite length, i.e.
\begin{equation}
\sum_{k}^{\infty}\|x^{k} - x^{k+1}\|<\infty.
\nonumber
\end{equation}
\end{Proposition}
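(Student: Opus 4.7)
My plan is to follow the standard Attouch--Bolte--Svaiter template for descent methods on KL functions. First I would pass to a tail of the sequence: because (H1) makes $\{H(x^k)\}$ nonincreasing and (H3) gives a subsequence with $H(x^{k_l})\to H(x^\ast)$, monotonicity forces $H(x^k)\downarrow H(x^\ast)$. If $H(x^{k_0})=H(x^\ast)$ for some $k_0$, then (H1) implies $\theta\|x^{k+1}-x^k\|^2\le H(x^k)-H(x^{k+1})=0$ for all $k\ge k_0$, so the sequence is eventually constant and there is nothing to prove. Thus I may assume $H(x^k)>H(x^\ast)$ for every $k$.

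Next I would localize. Fix the KL neighborhood $\mathcal{N}$, radius $\epsilon$ and level $\nu$ given by the KL property at $x^\ast$, and choose $k_l$ in the subsequence of (H3) large enough that $x^{k_l}$ lies in a strictly smaller ball around $x^\ast$ and $H(x^{k_l})-H(x^\ast)<\nu$. Relabeling, I assume this holds from $k=0$. The key inequality comes from combining three ingredients at each step: concavity of $\psi$ gives
\begin{equation*}
\psi\bigl(H(x^k)-H(x^\ast)\bigr)-\psi\bigl(H(x^{k+1})-H(x^\ast)\bigr)\ge \psi'\bigl(H(x^k)-H(x^\ast)\bigr)\bigl(H(x^k)-H(x^{k+1})\bigr);
\end{equation*}
the KL inequality gives $\psi'(H(x^k)-H(x^\ast))\ge 1/\mathrm{dist}(0,\partial H(x^k))$; and (H2) provides $\mathrm{dist}(0,\partial H(x^k))\le C\|x^k-x^{k-1}\|$. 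Combining with (H1), and writing $\Delta_k:=\psi(H(x^k)-H(x^\ast))-\psi(H(x^{k+1})-H(x^\ast))$, this yields
\begin{equation*}
\Delta_k \ge \frac{\theta\,\|x^{k+1}-x^k\|^2}{C\,\|x^k-x^{k-1}\|}.
\end{equation*}
Applying $2\sqrt{ab}\le a+b$ to the right-hand side gives the telescoping bound
\begin{equation*}
2\|x^{k+1}-x^k\|\le \|x^k-x^{k-1}\|+\frac{C}{\theta}\,\Delta_k.
\end{equation*}

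Summing from $k=1$ to $N$ the telescoping structure collapses the $\Delta_k$ terms and leaves $\sum_{k=1}^{N}\|x^{k+1}-x^k\|\le \|x^1-x^0\|+\frac{C}{\theta}\psi(H(x^1)-H(x^\ast))$, which is uniformly bounded in $N$. Hence $\{x^k\}$ is Cauchy, so it converges to a point $\bar{x}$, and the subsequence from (H3) forces $\bar{x}=x^\ast$. Passing to the limit in (H2) and using lower semicontinuity of $H$ together with closedness of the graph of $\partial H$ (via the sequence $d^{k+1}\to 0$), we conclude $0\in\partial H(\bar{x})$, i.e. $\bar{x}$ is critical.

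The main obstacle is the bookkeeping in localization: I need to know that all iterates $x^k$ stay inside the KL neighborhood $\mathcal{N}$ so that the KL inequality is legitimately applicable at every step, not just initially. This is the usual delicate point and is handled by a bootstrap argument: shrinking the starting index further if necessary so that $\|x^0-x^\ast\|+\|x^1-x^0\|+\frac{C}{\theta}\psi(H(x^1)-H(x^\ast))<\epsilon$, then inductively using the partial sum bound above to show $\|x^k-x^\ast\|<\epsilon$ for all $k$. The remaining steps (monotonicity of $\psi$ on $(0,\nu)$, divergence of the series if $\psi$ were unbounded, etc.) are routine once the neighborhood is preserved.
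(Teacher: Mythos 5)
Your argument is correct: it is precisely the standard Attouch--Bolte--Svaiter descent-plus-KL template, which is exactly what the paper invokes here (its ``proof'' is a one-line citation of Lemma 2.6 of the Attouch--Bolte--Svaiter reference, whose proof is the argument you wrote out). The telescoping estimate, the $2\sqrt{ab}\le a+b$ trick, the bootstrap keeping iterates in the KL neighborhood, and the closedness of the limiting subdifferential for criticality are all the right ingredients and are assembled correctly.
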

\begin{proof}
Follows  from \textbf{Lemma 2.6} of \cite{Attouch2013Convergence}.
\end{proof}
\subsection{Linear convergence rate }
\begin{Theorem}\label{H123}
Let $\{x^{k} \}_{k=0}^{\infty}$ be the sequence generated  by CD \textbf{Algorithm 1} for objective cost function $F$ defined in (\ref{nonconvex model}) with SCAD or MCP penalty. If the sequences admits  a accumulation point $x^{\ast}$, then
\begin{itemize}
\item (a) (\textbf{H1}) holds, i.e.,
\begin{equation}
F(x^{k})-F(x^{k+1})\geq \theta\|x^{k}-x^{k+1}\|^{2}_{2};
\nonumber
\end{equation}
\item (b) (\textbf{H2}) holds, i.e., for each $k\in\mathbb{N}$, there exists some $d^{k+1}\in\partial F(x^{k+1})$, such that
\begin{equation}
\|d^{k+1}\| \leq C\|x^{k+1} - x^{k}\|_{2};
\nonumber
\end{equation}
\item (c) Let  $x^{k_{l}}\rightarrow x^{\ast}$, then we have
\begin{equation}
F(x^{k_{l}})\rightarrow F(x^{\ast});
\nonumber
\end{equation}
\item (d) $\{x^{k}\}_{k=0}^{\infty}$ converges to  $x^{\ast}$. 
The sequence $\{x^{k}\}_{k\in\mathbb{N}}$ has a finite length, i.e.
\begin{equation}
\sum_{k}^{\infty}\|x^{k} - x^{k+1}\|<\infty.
\nonumber
\end{equation}
\item (e) $\{x^{k}\}_{k=0}^{\infty}$ converges to  $x^{\ast}$ linearly.
\end{itemize}
\end{Theorem}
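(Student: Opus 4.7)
\noindent
The plan is to verify conditions \textbf{(H1)}, \textbf{(H2)}, \textbf{(H3)} for the CD iterates, invoke Proposition~\ref{finite length} together with the KL exponent $1/2$ from Proposition~\ref{klindex}, and then run a standard KL-based geometric rate argument. Parts (a)--(d) set up the ingredients; part (e) is the quantitative payoff that uses the exponent $1/2$ essentially.

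For (a), I first show that each one-dimensional subproblem $f_i(t)=\tfrac12(t-c_i^k)^2+\rho_{\lambda,\tau}(t)$ is strongly convex with modulus $\theta_0>0$, where $\theta_0=(\tau-2)/(\tau-1)$ for SCAD and $\theta_0=(\tau-1)/\tau$ for MCP. Equivalently, $\rho_{\lambda,\tau}(t)+\tfrac{1-\theta_0}{2}t^2$ is convex, which can be read off directly from the piecewise formulas in Table~\ref{table:Nonconvex penalty} by checking continuity and nondecreasing one-sided derivatives at every breakpoint; this is exactly where the hypotheses $\tau>2$ (SCAD) and $\tau>1$ (MCP) enter. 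Since $x_i^{k+1}$ minimizes $f_i$, strong convexity gives $f_i(x_i^k)-f_i(x_i^{k+1})\geq \tfrac{\theta_0}{2}(x_i^k-x_i^{k+1})^2$, and summing the per-coordinate decreases over $i=1,\dots,p$ telescopes to $F(x^k)-F(x^{k+1})\geq \tfrac{\theta_0}{2}\|x^{k+1}-x^k\|_2^2$, giving (H1). For (b), the optimality condition for the update at coordinate $i$ produces some $\eta_i\in\partial\rho_{\lambda,\tau}(x_i^{k+1})$ with $A_i^T\bigl(\sum_{j\le i}A_jx_j^{k+1}+\sum_{j>i}A_jx_j^k-b\bigr)+\eta_i=0$. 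The $i$-th entry of the corresponding element $d^{k+1}$ of $\partial F(x^{k+1})$ is then $\sum_{j>i}A_i^TA_j(x_j^{k+1}-x_j^k)$; Cauchy--Schwarz and $\|A_j\|_2=1$ yield $\|d^{k+1}\|_2\leq C\|x^{k+1}-x^k\|_2$ with $C$ depending only on $A^TA$. Part (c) is immediate from continuity of $F$ (quadratic plus continuous $\rho_{\lambda,\tau}$), and (d) follows by applying Proposition~\ref{finite length} with the KL property at $x^*$ from Proposition~\ref{klindex}.

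For (e), invoke the KL inequality with exponent $1/2$ at $x^*$: for $k$ sufficiently large, $\operatorname{dist}(0,\partial F(x^{k+1}))\geq c\bigl(F(x^{k+1})-F(x^*)\bigr)^{1/2}$. Setting $r_k:=F(x^k)-F(x^*)\geq 0$ and chaining (H1), (H2), and KL,
\[
r_k-r_{k+1}\;\geq\;\theta\,\|x^{k+1}-x^k\|_2^2\;\geq\;\tfrac{\theta}{C^2}\|d^{k+1}\|_2^2\;\geq\;\tfrac{\theta c^2}{C^2}\,r_{k+1},
\]
so $r_{k+1}\leq q\,r_k$ with $q=(1+\theta c^2/C^2)^{-1}\in(0,1)$, i.e.\ Q-linear convergence of $r_k\to 0$. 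For the iterates themselves, the finite-length bound from (d) combined with (H1) gives $\|x^{j+1}-x^j\|_2\leq\sqrt{r_j/\theta}$, and summing the geometric tail via $\|x^k-x^*\|_2\leq \sum_{j\geq k}\|x^{j+1}-x^j\|_2$ yields $\|x^k-x^*\|_2=O(q^{k/2})$, i.e.\ R-linear convergence.

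The principal obstacle is item (a): the nonconvex penalty must be tamed by the unit quadratic in $f_i$, and the thresholds $\tau>2$ and $\tau>1$ are precisely what is needed to absorb the maximal concavity of $\rho_{\lambda,\tau}$; without this, the descent property could fail and everything downstream collapses. Once per-coordinate strong convexity is in place, (b) is a direct rearrangement of the coordinate-wise optimality condition, (c)--(d) are structural, and (e) is a routine application of the standard KL-exponent-$1/2$ linear-rate recipe used for proximal-gradient-type methods.
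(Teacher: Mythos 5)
Your proposal is correct and follows essentially the same route as the paper: per-coordinate strong convexity of the surrogate $f_i(t)=\tfrac12(t-c_i^k)^2+\rho_{\lambda,\tau}(t)$ for (H1), the explicit subgradient $d_i^{k+1}=\sum_{j>i}A_i^TA_j(x_j^{k+1}-x_j^k)$ for (H2), continuity of $F$ for (H3), Proposition~\ref{finite length} for (d), and the KL-exponent-$\tfrac12$ contraction for (e). The only immaterial difference is that in (e) you run the geometric recursion on the function-value gaps $r_k=F(x^k)-F(x^*)$ while the paper runs it on the tail sums $A_k=\sum_{i\ge k}\|x^i-x^{i+1}\|^2$; the two are equivalent via (H1), and your explicit strong-convexity moduli $(\tau-2)/(\tau-1)$ and $(\tau-1)/\tau$ are just the instantiation of the paper's $1+\min\{\rho''\!,0\}$.
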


\begin{proof}
\begin{enumerate}[(a)]
\item  We use $\rho$ to short for $\rho_{\lambda,\tau}$.
Let $\theta=(1 + \min\{\rho^{\prime\prime}(|t|), 0\})/2.$
For fixed $k,i$, by the definition of CD  \textbf{Algorithm 1},  $$x_{i}^{k+1}= S_{\lambda,\tau}(c_{i}^{k}) \in \arg\min_{t} f_i(t)=\frac{1}{2}(t-c^k_i)^{2}+\rho(t).$$  We can conclude that
\begin{equation}\label{kkt}
0\in (x_{i}^{k+1}-c_i^{k}) + \partial \rho(x_{i}^{k+1}):=  \partial f_{i}(x_{i}^{k+1})
\end{equation}
and
\begin{equation}
\begin{aligned}
& F(x_{0}^{k+1}, ..., x_{i-1}^{k+1}, x_{i}^{k},...,x_{p}^{k+1}) - F(x_{0}^{k+1}, ..., x_{i}^{k+1}, x_{i+1}^{k},...,x_{p}^{k+1}) \\
= & f_i(x_{i}^{k}) - f_i(x_{i}^{k+1}) \\
\geq &\theta |x_{i}^{k} - x_{i}^{k+1}|^{2},
\nonumber
\end{aligned}
\end{equation}
where we use the strong convexity of $f_i$ in the last inequality. Then,
\begin{equation}
F(x^{k})-F(x^{k+1})\geq \theta\|x^{k}-x^{k+1}\|^{2}_{2}
\nonumber
\end{equation}
follows by summarizing the the above display  over all  coordinates. We then easily obtain
\begin{equation}
\sum_{k=1}^{\infty}\|x^{k}-x^{k+1}\|^{2}_{2}\leq F(x^{0})/ \theta <\infty
\nonumber
\end{equation}
and $\lim_{k\rightarrow\infty}\|x^{k}-x^{k+1}\|=0$.
\item Recall that $F(x)=\frac{1}{2}\|Ax - b\|^{2}_{2} + \sum_{i=1}^{p}\rho(x_{i})$. Consider the limiting subdifferential at $x^{k+1}$
\begin{equation*}
\partial F(x^{k+1}) = A^{T}(Ax^{k+1} - b) +
\left(
\begin{array}{ccc}
\partial \rho(x^{k+1}_{1}) \\
\partial \rho(x^{k+1}_{2}) \\
\vdots \\
\partial \rho(x^{k+1}_{p}) \\
\end{array}
\right)   \\
\end{equation*}
By (\ref{kkt}), we have
$- x^{k+1}_{i} + c_{i}^{k}\in \partial\rho(x^{k+1}_{i})$, i.e.,
\begin{align*}
- x^{k+1}_{i} + A_{i}^{T}\left(b - \sum_{j=1}^{i-1}x_{j}^{k+1}A_{j} - \sum_{j=i+1}^{p}x_{j}^{k}A_{j}\right) & \in \partial\rho(x^{k+1}_{i}) \\
- x^{k+1}_{i}+ x^{k+1}_{i} + A_{i}^{T}\left[\sum_{j=i+1}^{p}(x_{j}^{k+1} - x_{j}^{k})A_{j}\right] & \in \partial\rho(x^{k+1}_{i}) + A_{i}^{T}\left(Ax^{k+1} - b\right)\\
\sum_{j=i+1}^{p}(x_{j}^{k+1} - x_{j}^{k})A_{i}^{T}A_{j} & \in \partial\rho(x^{k+1}_{i}) + A_{i}^{T}\left(Ax^{k+1} - b\right).
\end{align*}
Let $d^{k+1}_{i}=\sum_{j=i+1}^{p}(x_{j}^{k+1} - x_{j}^{k})A_{i}^{T}A_{j}$. Then the above display shows
 $$d^{k+1} \in \partial F(x^{k+1}).$$  With $|A_{i}^{T}A_{j}|\leq 1$, we can conclude that,
\begin{equation}
|d^{k+1}_{i}|^{2} = \left|\sum_{j=i+1}^{p}(x_{j}^{k+1} - x_{j}^{k})A_{i}^{T}A_{j}\right|^{2} \leq (p-i)\sum_{j=i+1}^{p}\left|x_{j}^{k+1} - x_{j}^{k}\right|^{2} \leq (p-i)\|x^{k+1} - x^{k}\|^{2}.
\nonumber
\end{equation}
Then,
\begin{equation}
\|d^{k+1}\| \leq p\|x^{k+1} - x^{k}\|.
\nonumber
\end{equation}

\item 
Follows from the continuity of $F$. Moreover, the assumption that there exist a accumulation point  implies
 (\textbf{H3}) holds.
\item This  can be easily verified by applying \textbf{Proposition} \ref{finite length} with KL property.
%
\item
By Proposition \ref{klindex}, $F$ admits the KL property with exponent of $1/2$.
Then, using the Definition 2.3  and (b) we have
\begin{equation}
C^{2}\|x^{k} - x^{k+1}\|^{2} \geq \text{dist}^{2}(0, \partial F(x^{k+1}))\geq c^{2}(F(x^{k+1}) - F(x^{\ast})).
\nonumber
\end{equation}
Let $A_{k+1} = \sum_{i=k+1}^{\infty}\|x^{i} - x^{i+1}\|^{2}$.
The above display and  (a) implies,
\begin{equation}
C^{2}(A_{k} - A_{k+1})  \geq \text{dist}^{2}(0, \partial F(x^{k+1}))\geq c^{2}\theta^{2}A_{k+1},
\nonumber
\end{equation}
which leads to
\begin{equation}
A_{k+1}\leq\frac{C^{2}}{C^{2} + c^{2}\theta^{2}}A_{k}:=\nu^{2}A_{k},
\nonumber
\end{equation}
where $\nu\in (0,1)$ obviously.
From the above display and the  finite length of $\{x^{k}\}$ in (d), we know that there exists some $\eta>0$, such that
\begin{equation}
A_{k}\leq \nu^{2k}\eta^{2}
\nonumber
\end{equation}
and
\begin{equation}
\|x^{k}-x^{k+1}\|\leq \nu^{k}\eta.
\nonumber
\end{equation}
Then by triangle inequality and the convergence of $\{x_k\}_k$ to $x^*$ we have
\begin{equation}
\|x^{k} - x^{\ast}\| \leq \sum_{i=k}^{\infty}\|x^{i} - x^{i+1}\|\leq \frac{\nu^{k}\eta}{1-\xi},
\nonumber
\end{equation}
i.e,  $x^{k}$ globally converges to $x^{\ast}$  linearly.
\end{enumerate}
\end{proof}
\section{Conclusion}
 In this work, we prove the linear convergence rate of  coordinate descent method for solving MCP/SCAD penalized least squares problems.
 In the proof we use the assumption that  the sequences admits a accumulation point. Removing this assumption is an interesting question for further study.

\bibliography{ref}

%
\end{document}